\documentclass[letterpaper, 10 pt, conference]{ieeeconf}  
\usepackage{times}

\usepackage{amsthm, amsfonts, amsmath, amssymb, mathtools}
\usepackage{multicol}
\usepackage{xcolor}
\usepackage{multirow}
\usepackage{booktabs}
\usepackage[bookmarks=true, hidelinks]{hyperref}

\IEEEoverridecommandlockouts                              

\title{\LARGE \bf
Cooptimizing Safety and Performance Using Safety Value-Constrained Model Predictive Control
}

\author{Hao Wang$^{1, 2}$, Nam Nguyen$^{3}$, Armand Jordana$^{3}$, Ludovic Righetti$^{3}$, and Somil Bansal$^{2}$ 
\thanks{This research is supported in part by the DARPA Assured Neuro Symbolic Learning and Reasoning (ANSR) program and by the NSF CAREER program (2240163).}%
\thanks{$^{1}$Hao Wang is with the Ming Hsieh Department of Electrical and Computer Engineering, University of Southern California.
        {\tt\small haowwang@usc.edu}}%
\thanks{$^{2}$The authors are with the Department of Aeronautics and Astronautics, Stanford University.
        {\tt\small \{haowwang, somil\}@stanford.edu}}%
\thanks{$^{3}$The authors are with the Electrical and Computer Engineering Department, New York University.
        {\tt\small \{namnguyen, aj2988, ludovic.righetti\}@nyu.edu}}%
}

\newcommand{\state}{x}
\newcommand{\sset}{\mathcal{X}}
\newcommand{\sdim}{n_{\state}}
\newcommand{\stateseq}{\mathbf{\state}}
\newcommand{\ctrl}{u}
\newcommand{\ctrlseq}{\mathbf{\ctrl}}
\newcommand{\cset}{\mathcal{U}}

\newcommand{\controller}{\pi}
\newcommand{\cdim}{n_{\ctrl}}

\newcommand{\tfunc}{l}

\newcommand{\safeset}{\mathcal{S}}

\newcommand{\vfunc}{V}

\newcommand{\vfsafe}{\vfunc_s}
\newcommand{\cost}{J}

\newcommand{\traj}{\xi} 
\newcommand{\ctrlconstr}{g}

\newcommand{\tvar}{t}
\newcommand{\tend}{T}
\newcommand{\tdummy}{\tau}
\newcommand{\tinit}{0}
\newcommand{\thor}{[\tinit,\tend]}
\newcommand{\ph}{h}
\newcommand{\ch}{h_c}
\newcommand{\tendmpc}{K}
\newcommand{\mpcindex}{k}
\newcommand{\mpctvar}{j}
\newcommand{\dyn}{f}
\newcommand{\ddyn}{f_d}

\newcommand{\reals}{\mathbb{R}}

\newtheorem{problem}{Problem}

\newtheorem{proposition}{Proposition}

\newtheorem{remark}{Remark}

\begin{document}

\maketitle
\thispagestyle{empty}
\pagestyle{empty}

\renewcommand{\thefootnote}{*}
\begin{abstract}
    Autonomous systems are increasingly deployed in real-world environments, where they must achieve high performance while maintaining safety under state and input constraints. Although Model Predictive Control (MPC) provides a principled framework for constrained optimal control, guaranteeing safety beyond its finite planning horizon remains a fundamental challenge.
    In this work, we augment MPC with a safety value function–based terminal constraint that enforces membership in a control-invariant safe set at the end of each planning horizon. This formulation enables real-time synthesis of trajectories that are both high-performing and provably safe. 
    We show that, under an exact safety value function and a feasible initialization, the proposed MPC scheme is recursively feasible, thereby ensuring persistent safety.
    In contrast to traditional terminal set constructions that rely on local linearizations or conservative approximations, our approach incorporates a reachability-based safety value function for terminal constraints, yielding less conservative and more expressive safety guarantees.
    We validate the proposed framework through simulation and hardware experiments on a Flexiv Rizon 10s manipulator. Results demonstrate improved constraint satisfaction and robustness compared to standard state-constrained MPC and reactive safety filtering, while maintaining competitive task performance. The full implementation and experiments are available on the project website\footnote{\url{https://github.com/haowwang/safety_value_mpc}}.
\end{abstract} 

\section{Introduction}
Recent advances in robotics \cite{pi06start, gemini_robotics_1_5} have brought autonomous systems closer than ever to performing meaningful tasks in real-world environments. However, successful deployment of robots requires controllers that achieve high task performance while rigorously satisfying safety constraints on states and inputs—for example, manipulators operating near obstacles while carrying payloads or under degraded actuator limits. A central challenge is that safety and performance are often treated in a disconnected manner: controllers are typically optimized for performance at design time, while safety is enforced later as an auxiliary mechanism.

Safety filtering is a prominent example of this design philosophy \cite{ames_2016_cbf_qp, borquez_2024_safety_filtering, wabersich_2023_data_driven_filter}, where a filter modifies the output of a nominal controller to ensure constraint satisfaction. Although such approaches provide formal safety guarantees, they treat safety as a corrective layer rather than a co-optimized objective, often leading to myopic or overly conservative behavior. In contrast, the optimal control community has developed principled methods for synthesizing controllers under state-constrained optimal control formulations \cite{soner1986_state_constrained_oc, capuzzo1990_hj_state_cons, altarovici_2013_scocp, wang_2024_coop_safe_perf}. These approaches unify safety and performance at the formulation level, but their scalability is limited, as they typically rely on reachability or dynamic programming techniques that become intractable for high-dimensional systems.

In this work, we seek a scalable framework that co-optimizes safety and performance at design time for high-dimensional systems. We adopt the state-constrained optimal control formulation and solve it online using Model Predictive Control (MPC), leveraging MPC’s closed-loop replanning to enhance robustness to disturbances and modeling errors. However, in practical implementations, the MPC planning horizon is limited by real-time computational requirements. A finite horizon can compromise recursive feasibility and ultimately lead to safety violations, since constraint satisfaction is not explicitly guaranteed beyond the planning window.

To address this challenge, we introduce a value-function–based terminal constraint: at each planning step, we require the predicted terminal state to lie within a control-invariant safe set characterized by a safety value function. This construction ensures that the system can be kept safe beyond the horizon through a certified backup policy. To enable applicability to high-dimensional manipulators with complex state and input constraints, we approximate the safety value function using learning-based Hamilton–Jacobi (HJ) reachability methods.
This choice of a reachability-based safety value function offers two key advantages. First, in contrast to conservative terminal set constructions that rely on local approximations or handcrafted invariant sets, HJ reachability analysis characterizes the maximal control invariant safe set under the given dynamics and constraints \cite{hjr_survey}. Consequently, when used as a terminal constraint, it yields the largest possible feasible set for the MPC problem without sacrificing safety, leading to a better performance objective optimization. 

Second, reachability analysis naturally accommodates complex state and input constraints, including nonlinear dynamics, actuator limits, obstacle avoidance, and coupled constraints that commonly arise in robotic systems. This expressiveness allows our framework to enforce safety in a wide range of manipulation and interaction scenarios without simplifying the constraint structure.
Overall, our framework combines the global safety certification of reachability analysis with the computational tractability and performance optimization of MPC, yielding a scalable approach for real-time, safety-critical control of high-dimensional robotic systems.
The core contributions of this work are as follows
\begin{enumerate}
    \item We present an MPC formulation with a safety value function–based terminal constraint that ensures persistent constraint satisfaction and recursive feasibility, while optimizing performance. Under an exact safety value function and feasible initialization, we prove recursive feasibility of the closed-loop scheme.
    \item We synthesize a safety value function for a high dimensional system, namely a 14D manipulator, with complex state and control constraints using learning-based HJ reachability.
    \item We demonstrate the effectiveness of the proposed framework through extensive simulation studies across randomized trials and validate it experimentally on a Flexiv Rizon 10s manipulator under payload and obstacle-avoidance constraints.
\end{enumerate}

\section{Related Work}
\noindent \textbf{State-Constrained Optimal Control.} 
Designing controllers that are both safe and high-performing has long been a central problem in control theory. The primary theoretical framework for studying this problem is the state-constrained optimal control formulation, in which safety requirements are encoded as state constraints and task objectives are captured in the cost functional. Early foundational works \cite{soner1986_state_constrained_oc, capuzzo1990_hj_state_cons} established existence and characterization results under restrictive controllability and regularity assumptions. More recent efforts \cite{altarovici_2013_scocp, wang_2024_coop_safe_perf} have relaxed some of these assumptions and developed computational methods for specific classes of problems. However, these approaches fundamentally rely on dynamic programming and Hamilton–Jacobi–Bellman (HJB) formulations \cite{dp_book}, which suffer from the curse of dimensionality and therefore do not scale to high-dimensional robotic systems, though some data-driven efforts have been made to alleviate dimensionality challenges \cite{coop_pinn}. 

\noindent \textbf{Safety Filtering.} 
An alternative and widely adopted paradigm for ensuring safety in autonomous systems is safety filtering \cite{ames_2016_cbf_qp, borquez_2024_safety_filtering, wabersich_2023_data_driven_filter}. In this approach, a safety filter modifies the output of a nominal controller in real time to enforce constraint satisfaction, often using Control Barrier Functions (CBFs), reachability analysis, or related constructs. These methods are computationally efficient and broadly applicable across system classes. However, these methods treat safety as a corrective layer applied after the nominal controller is designed. As a result, safety and performance are not co-optimized, and the resulting behavior can be myopic, overly conservative, or suboptimal in practice.

\noindent \textbf{Model Predictive Control (MPC).} 
With advances in real-time optimization and numerical solvers, MPC has emerged as a powerful framework for synthesizing performant controllers that explicitly account for constraints. Several works \cite{cbf_mpc, zeng_enhance_safety_cbf_mpc, ihocbf, dynamic_obs_cbf} integrate Control Barrier Functions into the MPC formulation to enforce safety while preserving recursive feasibility. More broadly, terminal constraints and terminal sets have long been used in MPC to guarantee recursive feasibility and stability. The central challenge in these approaches lies in constructing a terminal set that is both control-invariant and computationally tractable. Existing CBF-based methods effectively use barrier certificates to define such terminal constraints, and recent work \cite{morton_manip_cbf} extends CBF design to manipulator systems. However, designing CBFs for high-dimensional nonlinear systems—particularly under complex input constraints—remains challenging and often requires problem-specific structure. Furthermore, the performance of the resulting controller is directly influenced by the conservativeness of the CBF used to construct the terminal constraint. Overly conservative barrier functions lead to small terminal sets, which restrict the feasible region of the MPC problem and can significantly degrade performance. This motivates the need for terminal sets that are as large as possible—ideally maximal—while still guaranteeing safety. 

\section{Problem Formulation}
In this work, we formulate the problem of co-optimizing safety and performance as a finite-horizon state-constrained optimal control problem \cite{altarovici_2013_scocp, wang_2024_coop_safe_perf} with horizon $\thor$.
Consider a continuous-time, deterministic, control-affine system governed by the ordinary differential equation $\dot{\state} = \dyn(\state,\ctrl)$
where $\state \in \sset \subseteq \reals^{\sdim}$ denotes the system state and $\ctrl \in \cset \subseteq \reals^{\cdim}$ denotes the control input. 
We denote by $\traj_{\state,\tvar}^{\ctrlseq}:[\tvar,\tend]\rightarrow\sset$ the state trajectory that evolves from an initial state $\state$ at time $\tvar$ under the control signal $\ctrlseq:[\tvar,\tend)\rightarrow \cset$. With a slight abuse of the notation, we use $\traj_{\state,\tvar}^{\ctrlseq}(\tdummy)$ to denote the system state at time $\tdummy\geq\tvar$ along this trajectory.

The performance objective is encoded through a running cost $r:\sset\times\cset\rightarrow\reals$ and a terminal cost $\phi:\sset\rightarrow\reals$ which together define the objective function in Eq.~\ref{eq:state_constrained_oc_cost}. On the other hand, the safety requirements are enforced through a \emph{state} constraint function $\tfunc:\sset\rightarrow\reals$, as specified in Eq.~\ref{eq:state_constraint}. Additionally, $\ctrlconstr:\cset\rightarrow\reals$ is the control constraint. Our goal is to synthesize a time-varying state-feedback controller $\controller : \sset \times [\tinit,\tend) \rightarrow \cset$ that solves Prob.~\ref{prob:state_constrained_opt_ctrl_prob}. 

\begin{problem}[State-Constrained Optimal Control Problem]\label{prob:state_constrained_opt_ctrl_prob}
\begin{subequations}
\begin{align}
    \begin{split}\label{eq:state_constrained_oc_cost}
    &\inf_{\ctrlseq} \quad \cost(\state, \tvar,\ctrlseq) = \int_\tvar^\tend r(\traj_{\state,\tvar}^{\ctrlseq}(\tdummy), \ctrlseq(\tdummy))  d\tdummy \\ 
    & \qquad \qquad \qquad \qquad \qquad \ \ \ + \phi(\traj_{\state,\tvar}^{\ctrlseq}(\tend)) \\
    \end{split}\\
    &s.t.  \quad \frac{d}{d\tdummy}\traj_{\state,\tvar}^{\ctrlseq}(\tdummy) = \dyn(\traj_{\state,\tvar}^{\ctrlseq}(\tdummy), \ctrlseq(\tdummy))  \ \forall \tdummy \in [\tvar,\tend) \label{eq:dyn}\\
    & \qquad \ \ctrlconstr(\ctrlseq(\tdummy))\geq 0  \ \forall \tdummy \in [\tvar,\tend) \label{eq:ctrl_constraint} \\ 
    & \qquad \ \tfunc(\traj_{\state,\tvar}^{\ctrlseq}(\tdummy))\geq 0  \ \forall \tdummy \in [\tvar,\tend] \label{eq:state_constraint}
    \end{align}
\end{subequations}
\end{problem}

\section{Background}
In this work, we solve the state-constrained optimal control problem Prob.~\ref{prob:state_constrained_opt_ctrl_prob} via MPC and use reachability-based terminal constraint. Below we provide a brief overview of MPC and HJ reachability.

\subsection{Model Predictive Control}
Model predictive control (MPC) solves the discrete-time counterpart of Prob.~\ref{prob:state_constrained_opt_ctrl_prob} for some planning horizon $\ph$ at each time step $\mpctvar < \tendmpc$, formulated in Prob.~\ref{prob:mpc}, and applies the first $c$ controls from the solution control signal. Then, at the next time step, Prob.~\ref{prob:mpc} is solved from the evolved state, and this process continues until the task horizon $\tendmpc$ is reached. We denote the planning state and control trajectory at time $\mpctvar$ by $\stateseq_\mpctvar$ and $\ctrlseq_\mpctvar$, respectively. 

\begin{problem}[State-Constrained MPC Formulation]\label{prob:mpc}
    \begin{subequations}
    \begin{align}
        &\min_{\ctrlseq} \quad \sum_{\mpcindex=0}^{\ph-1} r(\stateseq_\mpctvar(\mpcindex), \ctrlseq_\mpctvar(\mpcindex)) + \phi(\stateseq_\mpctvar(\ph)) \\ \label{eq:mpc_objective}
        \begin{split}
             & s.t. \quad \stateseq_\mpctvar(\mpcindex+1)  = \ddyn(\stateseq_\mpctvar(\mpcindex), \ctrlseq_\mpctvar(\mpcindex))  \\
             & \qquad \qquad \quad \forall \mpcindex \in \{0, \ldots, \ph - 1\} \\
        \end{split} \\
        & \qquad \ \ctrlconstr(\ctrlseq_\mpctvar(\mpcindex))\geq 0  \ \forall \mpcindex \in \{0, \ldots, \ph-1\}  \\
        & \qquad \ \tfunc(\stateseq_\mpctvar(\mpcindex))\geq 0  \ \forall \mpcindex \in \{0, \ldots, \ph\} 
        \end{align}
    \end{subequations}
\end{problem}

\subsection{Hamilton-Jacobi (HJ) Reachability Analysis}
Given a continuous-time system with bounded and Lipschitz dynamics $\dyn$, and a state constraint (Eq.~\ref{eq:state_constraint}), represented by a function $\tfunc(\state)$ intuitively measuring how close a state is to state constraint violation, the primary objective of Hamilton-Jacobi (HJ) Reachability analysis \cite{hjr_survey} is to solve the following \emph{safety optimal control problem} for all state $\state$ and time $\tvar\in\thor$. 

\begin{problem}[Safety Optimal Control Problem]\label{prob:safety_ocp}
\begin{subequations}
\begin{align}
    &\inf_{\ctrlseq} \quad \cost(\state, \tvar,\ctrlseq) = \min_{\tau\in[\tvar,\tend]} \tfunc(\traj_{\state,\tvar}^{\ctrlseq}(\tdummy)) \label{eq: reachability_objective}\\
    &s.t.  \quad \frac{d}{d\tdummy}\traj_{\state,\tvar}^{\ctrlseq}(\tdummy) = \dyn(\traj_{\state,\tvar}^{\ctrlseq}(\tdummy), \ctrlseq(\tdummy))  \ \forall \tdummy \in [\tvar,\tend) 
    \end{align}
\end{subequations}
\end{problem}
The solution of Prob.~\ref{prob:safety_ocp} at all state $\state\in\sset$ and time $\tvar\in\thor$ is represented by the \emph{safety value function} $\vfsafe(\state,\tvar)$, which is shown to be the viscosity solution of the following Hamilton-Jacobi-Bellman Variational Inequality (HJB-VI) \cite{min_time_ctrl}: 
\begin{equation} \label{eq:HJBVI}
    \begin{aligned}
    &\min \biggl\{\frac{\partial\vfsafe}{\partial \tvar} + \max_{\ctrl\in\cset} \{\frac{\partial\vfsafe}{\partial \state}^\top \dyn(\state,\ctrl)\}, \tfunc(\state) - \vfsafe(\state, \tvar) \biggr\} = 0\\
    & \forall \state\in\sset \ \text{and} \ \forall\tvar\in[\tinit,\tend),  \, \vfsafe(\state, \tend) = \tfunc(\state) \ \forall \state\in\sset
    \end{aligned}
\end{equation}
The safety value function $\vfsafe$ can be written succinctly as follows 
\begin{equation}\label{eq:safety_value_function}
    \vfsafe(\state, \tvar) = \sup_{\ctrlseq} \min_{\tdummy \in [\tvar,\tend]} \tfunc(\traj_{\state,\tvar}^{\ctrlseq}(\tdummy))
\end{equation}
In practice, the \emph{converged} safety value function defined by 
\begin{equation}\label{eq:converged_safety_value_function}
    \vfsafe(\state) = \lim_{\tvar\rightarrow\infty}\vfsafe(\state,\tvar) = \sup_{\ctrlseq} \min_{\tdummy \in [\tinit,\infty)} \tfunc(\traj_{\state,\tinit}^{\ctrlseq}(\tdummy))
\end{equation}
is an attractive alternative to the time-varying safety value function Eq.~\ref{eq:safety_value_function} as it encodes infinite-time safety information and can be used to ensure safety for arbitrary time horizons. For the remainder of this work, we rely exclusively on the converged safety value function, $\vfsafe(\state)$, which we will hereafter refer to simply as the safety value function.

It follows immediately that the super-zero level set of $\vfsafe$ is the set of states starting from which the system can maintain state constraint satisfaction indefinitely, we call this set the \emph{safe set} $\safeset$: 
\begin{equation}
    \safeset \triangleq \{\state\in\sset|\vfsafe(\state)\geq 0 \}
\end{equation}

In practice, we compute the safety value function by solving the HJB-VI either over a grid \cite{helperOC} or using neural approximators \cite{bansal_2021_deepreach, deepreach_mpc}. Grid-based methods \cite{helperOC} provide accurate solutions for low-dimensional systems (typically less than 6 dimensions) but are intractable for higher dimensional systems. One of the learning-based methods that have been shown to produce high quality solution is DeepReach \cite{deepreach_mpc}, which approximates the solution to the HJB-VI using two losses: 1) PDE residual loss, incentivizing the neural network to comply with the HJB-VI in Eq. \ref{eq:HJBVI}, and 2) supervision loss, fitting the network to known approximate solutions of the HJB-VI at different points in the state space. 
Intuitively, the PDE loss helps the neural network to learn a good approximation of the safety value function and the supervision loss helps stabilize the training. We refer the interested readers to \cite{deepreach_mpc} for more details.

\section{Method}
We solve Prob. \ref{prob:state_constrained_opt_ctrl_prob} online in a model predictive control (MPC) fashion (i.e. solving Prob. \ref{prob:state_constrained_opt_ctrl_prob} over a shorter planning horizon $\ph$, executing the first few controls, and solving Prob. \ref{prob:state_constrained_opt_ctrl_prob} again from the evolved state).
This closed-loop nature of MPC provides robustness against potential modeling errors and unforeseen disturbances in deployment. 

Although modern optimal control solvers have demonstrated impressive speed and reliability, in practice the planning horizon $\ph$ of MPC is typically much shorter than the overall task horizon $T$ due to real-time computational constraints. While this shortened horizon reduces the online computation burden, it can lead to myopic behavior, as the optimizer cannot reason about safety beyond the planning window. As a result, trajectories that appear feasible within the horizon may eventually lead to violations of safety constraints.

The key idea of our approach is to incorporate the HJ reachability-based safety value function into the MPC formulation, ensuring that the predicted terminal state lies within a certified safe set. This guarantees that safety can be maintained beyond the planning horizon, enabling persistent satisfaction of safety constraints during closed-loop operation.

\subsection{Safety Value MPC}\label{subsec:sv_mpc}

A desirable terminal constraint should satisfy two key properties: (a) its feasible set should be control invariant to ensure recursive safety; and (b) its feasible set should be as large as possible to avoid unnecessary performance degradation.
To achieve these properties, we construct the terminal constraint using Hamilton–Jacobi (HJ) reachability analysis. Specifically, we impose the constraint
\begin{equation}\label{eq:safety_value_constraint}
\vfsafe(\state) \geq \epsilon,
\end{equation}
where $\epsilon$ is a small positive constant.
In our theoretical analysis we set $\epsilon = 0$. However, in practice we use a small positive buffer to account for potential numerical inaccuracies in the approximation of $\vfsafe(\state)$, thereby ensuring that the resulting feasible set remains control invariant.
Importantly, the feasible set defined by Eq.~\ref{eq:safety_value_constraint} satisfies the two desired properties described above. We formalize these properties in the following proposition.

\begin{proposition}
    The set $\safeset = \{\state\in\sset|\vfsafe(\state)\geq 0\}$ is the maximal control invariant set under the system dynamics Eq.~\ref{eq:dyn} and state constraint Eq.~\ref{eq:state_constraint}. 
\end{proposition}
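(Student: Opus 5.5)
We need to show two things: (i) $\safeset$ is control invariant, meaning that from every $\state\in\safeset$ there is an admissible control signal whose trajectory stays in $\safeset$ for all future times; and (ii) every control invariant set $\mathcal{C}$ contained in the constraint set $\{\state \mid \tfunc(\state)\geq 0\}$ satisfies $\mathcal{C}\subseteq\safeset$. Throughout we use the characterization Eq.~\ref{eq:converged_safety_value_function}, $\vfsafe(\state)=\sup_{\ctrlseq}\min_{\tdummy\in[\tinit,\infty)}\tfunc(\traj_{\state,\tinit}^{\ctrlseq}(\tdummy))$. We also use time invariance of $\dyn$, so that the time origin can be shifted freely, and we assume the supremum is attained.

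\textbf{Maximality.} This direction is the easy one, so we do it first. Let $\mathcal{C}$ be control invariant with $\tfunc\geq 0$ on $\mathcal{C}$, and take $\state\in\mathcal{C}$. By invariance there is a $\ctrlseq$ whose trajectory satisfies $\traj_{\state,\tinit}^{\ctrlseq}(\tdummy)\in\mathcal{C}$ for all $\tdummy\geq\tinit$. Along this trajectory $\tfunc\geq 0$, so $\min_{\tdummy}\tfunc(\traj_{\state,\tinit}^{\ctrlseq}(\tdummy))\geq 0$. Taking the supremum over controls gives $\vfsafe(\state)\geq 0$, hence $\state\in\safeset$. The same argument shows $\safeset\subseteq\{\tfunc\geq 0\}$: setting $\tdummy=\tinit$ gives $\vfsafe(\state)\leq\tfunc(\state)$.

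\textbf{Invariance.} This relies on a dynamic programming (sub-optimality) property of $\vfsafe$. Take $\state$ with $\vfsafe(\state)\geq 0$ and let $\ctrlseq^*$ be an optimal control, which exists by the attainment assumption. Then $\min_{\tdummy\geq\tinit}\tfunc(\traj_{\state,\tinit}^{\ctrlseq^*}(\tdummy))\geq 0$. Fix $s\geq\tinit$ and let $y=\traj_{\state,\tinit}^{\ctrlseq^*}(s)$. The tail control $\ctrlseq^*(\cdot+s)$, started at $y$, reproduces the tail of the trajectory by time invariance. Therefore
\[
\vfsafe(y)\;\geq\;\min_{\tdummy\geq s}\tfunc(\traj_{\state,\tinit}^{\ctrlseq^*}(\tdummy))\;\geq\;\min_{\tdummy\geq \tinit}\tfunc(\traj_{\state,\tinit}^{\ctrlseq^*}(\tdummy))\;\geq\;0,
\]
so $y\in\safeset$ for every $s$. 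Thus $\ctrlseq^*$ keeps the state in $\safeset$ forever, which is exactly control invariance.

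\textbf{Main obstacle.} The hard part is the existence of an optimal (or at least a safe) control when the supremum in Eq.~\ref{eq:converged_safety_value_function} is not attained, for instance on the boundary $\vfsafe=0$. If $\vfsafe(\state)>0$, an $\epsilon$-optimal control with $\epsilon<\vfsafe(\state)$ is enough. On the boundary, however, we need a limiting argument. We would invoke the standing assumptions that $\dyn$ is bounded and Lipschitz, $\tfunc$ is continuous, and $\cset$ is compact, and additionally assume $\dyn(\state,\cset)$ is convex (or pass to relaxed controls). Under these conditions the Filippov/Arzel\`a--Ascoli compactness of the trajectory set on each finite interval $[\tinit,N]$ applies: a sequence of $1/m$-optimal trajectories has a convergent subsequence, and a diagonal argument over $N$ yields a limit trajectory along which $\tfunc\geq 0$ everywhere. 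Alternatively, the invariance can be read off directly from the HJB-VI Eq.~\ref{eq:HJBVI}, whose Hamiltonian condition $\max_{\ctrl}\frac{\partial\vfsafe}{\partial\state}^\top\dyn\geq 0$ holds in the viscosity sense on $\safeset$. We would state the attainment assumption explicitly and relegate this compactness step to the standard references.
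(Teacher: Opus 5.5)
Your proof is correct and follows the same two-part route as the paper: invariance comes from a control whose trajectory keeps $\tfunc\geq 0$, and maximality from the fact that $\vfsafe(\state)<0$ forces every trajectory from $\state$ to violate the state constraint. You argue maximality directly for an arbitrary invariant $\mathcal{C}\subseteq\{\tfunc\geq 0\}$ rather than by contradiction against a superset, which gives the slightly stronger ``greatest element'' statement. Beyond that you are more rigorous than the paper: it asserts ``by definition'' that some control keeps the state in $\safeset$ and leaves implicit that the competing set lies in $\{\tfunc\geq 0\}$, whereas you supply the time-shift argument showing the trajectory stays in $\safeset$ (not merely in $\{\tfunc\geq 0\}$), state the constraint-set restriction explicitly, and isolate attainment of the supremum on $\{\vfsafe=0\}$ as the one step needing a compactness/Filippov argument (no relaxation is needed there for the paper's control-affine dynamics when $\cset$ is convex and compact).
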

\begin{proof}
    This proof follows largely from \cite{borquez_2024_safety_filtering}. Take $\state\in\safeset$. It follows that  $\vfsafe(\state)\geq 0$, and by definition of the safety value function in Eq.~\ref{eq:converged_safety_value_function}, there exists some control signal $\ctrlseq^*$ that keeps the system within $\safeset$ starting from $\state$ indefinitely. Hence, $\safeset$ is control invariant.
    
    Now suppose $\safeset$ is not maximal, then there exists a control invariant set $\safeset'$ such that $\safeset\subset\safeset'$. Take $\state\in\safeset'\backslash\safeset$. By definition of $\safeset$, $\vfsafe(\state)<0$, and it follows immediately from the definition of $\vfsafe(\state)$, $\exists \tdummy\in[\tinit,\infty)$ such that $\tfunc(\traj_{\state,\tinit}^{\ctrlseq}(\tdummy)) < 0$. Hence $\state$ cannot stay within $\safeset'$, and $\safeset'$ is not control invariant. We have reached a contradiction; therefore, $\safeset$ is maximal. 
\end{proof}

Once the constraint in \eqref{eq:safety_value_constraint} is incorporated, we obtain the following MPC formulation, which we refer to as \textit{Safety Value MPC}.
\begin{problem}[Safety Value MPC]\label{prob:sv_mpc}
    \begin{subequations}
    \begin{align}
        &\min_{\ctrlseq} \quad \sum_{\mpcindex=0}^{\ph-1} r(\stateseq_\mpctvar(\mpcindex), \ctrlseq_\mpctvar(\mpcindex)) + \phi(\stateseq_\mpctvar(\ph)) \\ \label{eq:mpc_objective}
        \begin{split}
             & s.t. \quad \stateseq_\mpctvar(\mpcindex+1)  = \ddyn(\stateseq(\mpcindex), \ctrlseq(\mpcindex))  \\
             & \qquad \qquad \quad \forall \mpcindex \in \{0,\ldots, \ph -1\} \\
        \end{split} \\
        & \qquad \ \ctrlconstr(\ctrlseq_\mpctvar(\mpcindex))\geq 0  \ \forall \mpcindex \in \{0,\ldots, \ph -1\} \\
        & \qquad \ \tfunc(\stateseq_\mpctvar(\mpcindex))\geq 0  \ \forall \mpcindex \in \{0, \ldots, \ph -1\}  \\ 
        & \qquad \ \vfsafe(\stateseq_\mpctvar(\ph)) \geq 0 \label{eq:svfunc_constraint} 
        \end{align}
    \end{subequations}
\end{problem}
Solving the Safety Value MPC problem yields a controller that retains the performance benefits of MPC while ensuring persistent safety through the reachability-based terminal constraint. In particular, the safety value constraint guarantees that the terminal state lies within a control-invariant safe set, ensuring that safety can be maintained beyond the planning horizon \textit{regardless} of the MPC horizon length $h$. Consequently, the proposed formulation co-optimizes performance and safety within a single MPC framework.

We now show that the MPC formulation equipped with the safety value function constraint is recursively feasible.

\begin{proposition}[Recursive Feasibility of Safety Value MPC]\label{lemma:recursive_feasibility}
    Suppose the initial state $\state_\tinit \in \safeset$. Then Prob. \ref{prob:sv_mpc} is recursively feasible. 
\end{proposition}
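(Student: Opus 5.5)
We argue by induction on the MPC step index $\mpctvar$, building at each step an explicit feasible candidate for Prob.~\ref{prob:sv_mpc} by the standard shifted-tail construction. The terminal ingredient is the control invariance of $\safeset$ from the preceding proposition. Throughout we work with $\epsilon=0$. We also interpret the discrete-time model $\ddyn$ as the exact sampled version of the continuous dynamics under piecewise-constant controls, so that $\safeset$ is control invariant for $\ddyn$ as well. This identification is the main obstacle, discussed below.

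\emph{Base case.} Since $\state_\tinit\in\safeset$, we have $\vfsafe(\state_\tinit)\geq 0$. By Eq.~\ref{eq:converged_safety_value_function}, there is an admissible control signal (satisfying $\ctrlconstr\geq 0$) whose trajectory satisfies $\tfunc\geq 0$ for all future times. Along this trajectory every state satisfies $\vfsafe\geq 0$: by dynamic programming, the tail of a safe trajectory is itself safe. Taking the first $\ph$ controls of this signal yields a sequence $\ctrlseq_0$ that satisfies the dynamics and control constraints, gives $\tfunc(\stateseq_0(\mpcindex))\geq 0$ for $\mpcindex<\ph$, and gives $\vfsafe(\stateseq_0(\ph))\geq 0$. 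Hence Prob.~\ref{prob:sv_mpc} is feasible at $\mpctvar=0$.

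\emph{Inductive step.} Suppose Prob.~\ref{prob:sv_mpc} is feasible at step $\mpctvar$ with an optimal (or any feasible) solution $(\stateseq_\mpctvar,\ctrlseq_\mpctvar)$, and the first $c\leq\ph$ controls are applied. Under the nominal model, the new state is $\stateseq_\mpctvar(c)$. For the candidate at step $\mpctvar+c$, we keep the remaining planned controls $\ctrlseq_\mpctvar(c),\dots,\ctrlseq_\mpctvar(\ph-1)$, which reproduce the states $\stateseq_\mpctvar(c),\dots,\stateseq_\mpctvar(\ph)$ and hence satisfy $\tfunc\geq 0$ at the shifted indices. Since $\vfsafe(\stateseq_\mpctvar(\ph))\geq 0$ implies $\tfunc(\stateseq_\mpctvar(\ph))\geq \vfsafe(\stateseq_\mpctvar(\ph))\geq 0$ (take $\tdummy=\tinit$ in Eq.~\ref{eq:converged_safety_value_function}), the old terminal state, which now appears at an intermediate index, also satisfies the state constraint. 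To fill in the last $c$ steps, we invoke control invariance of $\safeset$ at $\stateseq_\mpctvar(\ph)$. This gives a backup control sequence of length $c$, satisfying $\ctrlconstr\geq 0$, that keeps every state in $\safeset$ and hence satisfies $\tfunc\geq0$, and whose endpoint obeys $\vfsafe\geq 0$. Concatenating the tail and the backup segment produces a feasible solution at step $\mpctvar+c$, and the result follows by induction up to $\tendmpc$. As a corollary, the closed loop satisfies $\tfunc\geq 0$ at all sampled times.

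The delicate point is the passage between continuous and discrete time. The preceding proposition provides invariance for the continuous flow with measurable controls, whereas Prob.~\ref{prob:sv_mpc} uses $\ddyn$ with one control per sample. I would handle this by stating explicitly that $\vfsafe$ is the safety value function of the discrete-time system $\ddyn$ with the constraints $\ctrlconstr$ and $\tfunc$. Equivalently, one can assume the sampled safety value function coincides with $\vfsafe$, which is exactly the "exact safety value function" hypothesis of the contributions. Under that assumption the existence of the backup sequence is immediate from the sup in Eq.~\ref{eq:converged_safety_value_function} being attained. If the sup is not attained, a compactness and continuity argument on $\cset$ and $\tfunc$ would be needed to produce a maximizing control.
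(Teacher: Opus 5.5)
Your proof is correct and takes the same route as the paper: feasibility at the first step comes from truncating a safe control signal. At each subsequent step, the remaining tail of the previous plan is concatenated with a backup segment obtained from invariance of $\safeset$ at the old terminal state, and this is repeated inductively. Your extra remarks make explicit assumptions that the paper's proof uses tacitly: that $\tfunc \geq \vfsafe$ keeps the old terminal state within the state constraint, that $\vfsafe$ must be read as the value function of the sampled system $\ddyn$ with admissible controls, and that the supremum must be attained.
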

\begin{proof}
   By assumption, $\vfsafe(\state_\tinit) \geq 0$. Then, at initial time $\tinit$, there exists a control signal $\ctrlseq_0^*$ that can keep the system within $\safeset$ indefinitely. Then, the first $\ph$ controls from $\ctrlseq_0^*$ is a feasible solution for Prob.~\ref{prob:sv_mpc}. In other words, Prob.~\ref{prob:sv_mpc} is feasible at time $\tinit$. Let us denote the solution of Prob.~\ref{prob:sv_mpc} at time $\mpctvar=0$ and state $\state_\tinit$ as $\ctrlseq_\tinit$, and we denote the final state that evolves from $\state_0$ under $\ctrlseq_0$ with $\state_{f0}$. 
   
   Apply the first $\ch < \ph$ controls from $\ctrlseq_0$ and we arrive at time $\mpctvar = 1$ and state $\state_1$. Note that by applying controls $\ctrlseq_0(\ch), \ldots \ctrlseq_0(\ph-1)$, the system evolves from $\state_1$ to $\state_{f0}$. Since $\state_{f0} \in\safeset$, there exists a control signal $\ctrlseq_1^*$ that keeps the system within $\safeset$ indefinitely. Then, $\ctrlseq_0(\ch),\cdots, \ctrlseq_0(\ph-1), \ctrlseq_1^*(0), \cdots, \ctrlseq_1^*(\ch-1)$ is a feasible solution for Prob.~\ref{prob:sv_mpc} at time $\mpctvar=1$ and $\state_1$. 

    Repeating the process above, and at time $\mpctvar=k < \tendmpc$ and $\state_k$, Prob.~\ref{prob:sv_mpc} is feasible using the same logic. Hence, Prob.~\ref{prob:sv_mpc} is recursively feasible. 
 \end{proof}  
   
\begin{remark}
    Note that the MPC problem in Prob.~\ref{prob:sv_mpc} is nonconvex in general, and our proposed method does not require convexity. In this work, we employ sequential convex programming \cite{successive_cvx_prog} to solve Prob.~\ref{prob:sv_mpc} efficiently in real time.
\end{remark}

\subsection{Computing Safety Value Function}
To compute the safety value function for high-dimensional systems (e.g., manipulators), we adopt a learning-based method DeepReach \cite{deepreach_mpc}. Unlike \cite{deepreach_mpc}, however, we compute the approximate solution to the HJB-VI (Eq.~\ref{eq:HJBVI}) at the supervision points by solving an \emph{approximate} safety optimal control problem, rather than directly solving the safety optimal control problem in Prob.~\ref{prob:safety_ocp} using gradient-free methods. It is worthwhile to note that DeepReach \cite{deepreach_mpc} learns a \emph{neural approximation} of the safety value function regardless of how the solution to the HJB-VI at the supervision points are computed. In this work, the approximate safety optimal control problem uses the following objective:
\begin{equation}\label{eq:approx_safety_ocp_objective}
J(\state,\tvar,\ctrlseq) = \sum_{\tdummy \in [\tvar,\tend]} e^{-\alpha \tfunc(\traj_{\state,\tvar}^{\ctrlseq}(\tdummy))},
\end{equation}
where $\alpha$ is a large positive constant. This objective heavily penalizes violations of the state constraint along the trajectory, thereby closely approximating the reachability objective in Eq.~\ref{eq: reachability_objective}. Intuitively, trajectories that approach or violate the unsafe region incur exponentially larger penalties, encouraging solutions that remain within the safe set. We solve this approximate safety optimal control problem using a gradient-based solver \cite{jordana_2023_mim_solvers}. In our experiments, we found that solving the approximate safety optimal control problem yields more accurate value function estimates than solving the safety optimal control problem in Prob.~\ref{prob:safety_ocp} with gradient-free methods. We believe that this is due to the summation objective structure being better conditioned than the minimum over time objective structure of the reachability objective in Eq.~\ref{eq: reachability_objective}. 

\section{Simulation Experiment}
\subsection{Experiment Setup}\label{subsec:sim_exp_setup}
In this simulation experiment, we consider a scenario where a manipulator is tasked with moving a heavy object under reduced torque limit, while avoiding the obstacle in the workspace. Such scenarios are of practical importance in manufacturing, as the manipulators are often tasked to carry heavy payloads and actuators inevitably degrade over time leading to reduced torque output. Though motion planning can enable the manipulator to perform agile obstacle avoidance maneuvers, it does not consider the payload nor the torque limit of the manipulator, leading to likely obstacle collision/violation of the state constraint.

In this experiment, we simulate a Flexiv Rizon 10s manipulator with a 14D state space (7 joint position variables and 7 joint velocity variables) and a 7D control space (7 joint torques). The task of the manipulator is to move from a given initial state to the desired end-effector position in the workspace, while avoiding a cylindrical obstacle. Let us denote the joint position and joint velocity by $q$ and $\dot{q}$, and we use the shorthand $FK(\cdot)$ for forward kinematics, mapping a joint position $q$ to the $(x,y,z)$ position of a point on the end-effector. Furthermore, we denote the desired end-effector position by $p_d$. More concretely, the running cost and final cost are given by $r(q, \dot{q}, u) = \phi(q, \dot{q}, u) = ||FK(q) - p_d||_2$, the obstacle constraint is given by $l(FK(q)) \geq 0$.

\noindent \textbf{Baselines and Implementation Details.} We consider two baselines: 1) \textbf{(MPC)} the state-constrained MPC formulation without the additional terminal safety value function constraint, and 2) \textbf{(SB-Filter)} the smooth blending safety filter presented in \cite{ames_2016_cbf_qp, borquez_2024_safety_filtering}. The SB-Filter baseline uses the state-constrained MPC as the nominal controller and performs safety filtering with the safety value function used in our method, and the resulting filtering optimization problem is solved using OSQP \cite{osqp}. Furthermore, we use the SQP solver from \cite{jordana_2023_mim_solvers} designed specifically for MPC and based on the Crocoddyl optimal control library \cite{mastalli_2020_crocoddyl} to implement the MPC controllers used in all 3 methods. We run the experiment from 100 randomly sampled starting states for all methods. All computations are carried out on a desktop with an Intel Core i9 14900KF CPU with 64G of system memory and a NVIDIA GeForce RTX 4090 GPU. The parameters for the experiment are presented in TABLE.~\ref{tab:simulation_experiment_parameters}. 

\begin{table}[h]
    \centering
    \begin{tabular}{lc}
        \hline
        \textbf{Parameter} & \textbf{Value} \\
        \hline
        Task Horizon & 15 s \\
        MPC $dt$ & 0.04 s \\
        MPC Control Horizon & 1 \\
        Solver Maximum Sequential Convexification Iterations & 15 \\
        Safety Value Function Constraint Buffer $\epsilon$ & 0.05 \\
        \hline
    \end{tabular}
    \caption{Parameters for the simulation experiment}
    \label{tab:simulation_experiment_parameters}
\end{table}

\begin{figure}[t]
    \centering
    \includegraphics[width=1\linewidth]{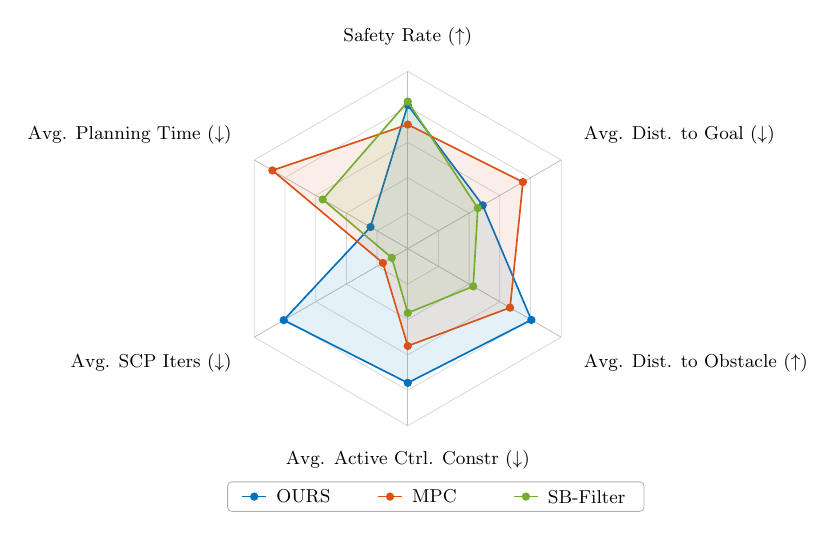}
    \caption{Normalized metrics for the proposed method and the baselines with a planning horizon of 8 in the simulation experiment. Arrows indicate the preferred direction for each raw metric (higher $\uparrow$ or lower $\downarrow$); however, all axes are scaled such that values closer to the outer perimeter represent superior performance on the metric.}
    \label{fig:simulation_experiment_radar_plot}
\end{figure}

\vspace{0.2em}
\noindent \textbf{Evaluation Metrics.}
We consider 6 metrics: 1) \textbf{(Safety Rate)} the percentage of trials that are collision-free; 2) \textbf{(Avg. Dist. to Goal)} the average distance to goal over all time steps and trials, measuring the performance aspect of the methods; 3) \textbf{(Avg. Dist. to Obstacle)} the average distance to the obstacle over all time steps and trials, measuring the safety margin for the methods; 4) \textbf{(Avg. Active Ctrl. Constr.)} the average number of active control constraints over all time steps and all trials, measuring the frequency of actuator saturation; 5) \textbf{(Avg. SCP. Iters.)} the average number of sequential convexification iterations required by the solver to converge over all time steps and trials; and 6) \textbf{(Avg. Planning Time)} the average planning time over all time steps and trials. Note that we compute all the metrics, except the safety rate, using the collision-free trials from each method. 

\subsection{Experiment Results}
We plot the normalized results for the planning horizon $h=8$  in Fig.~\ref{fig:simulation_experiment_radar_plot}. 
In terms of the safety rate, our method outperforms the MPC baseline but performs similarly to the SB-Filter baseline. This is expected as both our method and the SB-Filter baseline, in theory, maintain safety with the safety value function, and the safety value function effectively provides longer safety look-ahead beyond the MPC's planning horizon, leading to better safety rate than the MPC baseline. Beyond baseline safety, our method demonstrates three key advantages that are preferred in practice and for real world deployment: it achieves faster convergence by requiring fewer solver iterations, provides a wider physical safety margin by maintaining a larger average distance to obstacles, and preserves control authority for the low-level tracking controllers by resulting in fewer active control constraints.

On the other hand, our method is outperformed in terms of the average distance to goal and average planning time. It is unsurprising that our method generates trajectories that are on average further away from the goal, as our method maintains larger distance to the obstacle, effectively trading path optimality for safety. For the average planning time, our method is at a disadvantage due to the representation of the safety value function. Since Prob.~\ref{prob:sv_mpc} is solved with a SQP solver, it is necessary to take a forward and a backward pass on the neural network at each convexification step, leading to longer average planning time. Optimizing and compiling our neural network model can help reduce the average planning time. However, the proposed method still remains real-time as we also show in our hardware experiments.  

We visualize the end-effector trajectories over the $x-y$ plane for 2 of the trials in Fig.~\ref{fig:sim_exp_traj_comp_plot}. On the left, we have a trial where our method succeeds but both the baseline fails. The baselines initially choose a more optimal path, in terms of performance (path length), but it eventually leads to an unrecoverable failure. Our method takes a less optimal but ultimately safe path. On the right, we visualize a trial where all 3 methods are safe. Our method stays further away from the obstacle while the baselines almost step on the boundary of the obstacle. 

\begin{figure}[t]
    \centering
    \includegraphics[width=0.9\linewidth]{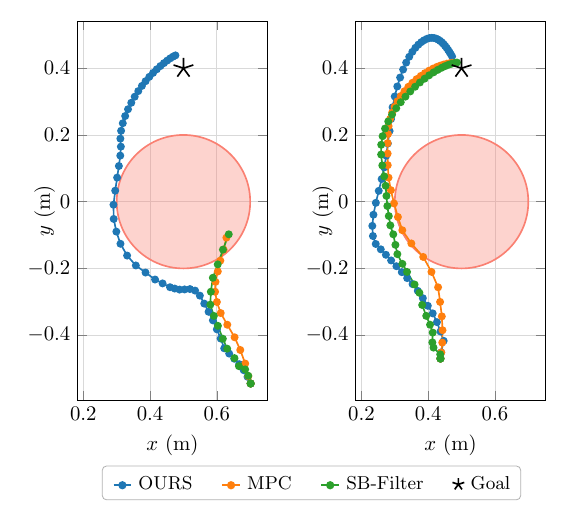}
    \caption{End-effector trajectories visualized over the $x-y$ plane for 2 trials in the simulation experiment.}
    \label{fig:sim_exp_traj_comp_plot}
\end{figure}

We also perform an ablation experiment to investigate the impact of planning horizon on each method in terms of the 6 aforementioned metrics. The results are presented in TABLE.~\ref{tab:ablation_concise}. Here, we focus on the safety aspect of the methods, and the results are visualized in Fig.~\ref{fig:simulation_experiment_ph_ablation}. For each method and planning horizon, we plot the average distance to the obstacle and the safety rate on the $x$ and $y$ axis, respectively. For better visualization, we plot each method with a specific marker shape, and the shade of the marker represents the planning horizon (shade becomes darker as planning horizon increases). We can observe that as the planning horizon increases, the safety rate of the MPC baseline increases. This is expected due to a longer safety look-ahead. On the other hand, the safety rates for our method and the SB-Filter baseline stay roughly unchanged. This observation also aligns with our expectation since both methods in theory keep the system within the safe set $\safeset$ and maintain safety independent of the planning horizon. Another observation is that as the planning horizon increases, all the methods see their average distance to the obstacle decreases, but our method consistently stays further away from the obstacle than the baselines for any planning horizon. 

\begin{figure}[t]
    \centering
    \includegraphics[width=0.9\linewidth]{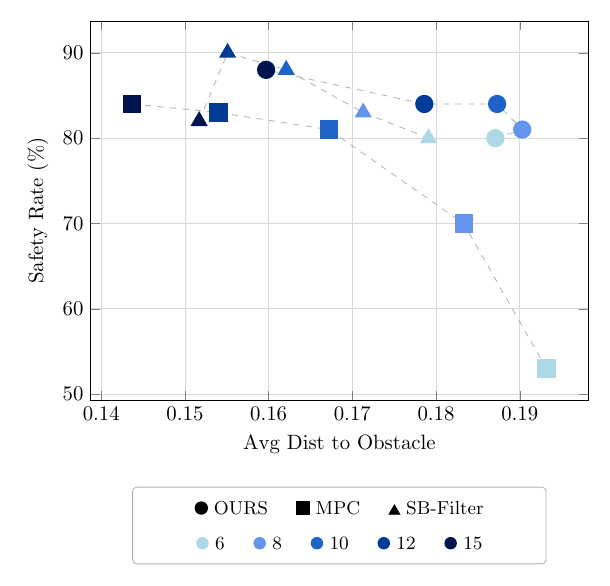}
    \caption{Visualizing the safety rate and average distance to the obstacle for all 3 methods and 5 different planning horizon settings in the simulation experiment.}
    \label{fig:simulation_experiment_ph_ablation}
\end{figure}

\begin{table*}
    \centering
    \scriptsize
    \begin{tabular}{cccccccc}
        \toprule
        Planning Horizon & Method & Safety Rate & Avg. Dist to Goal (m) & Avg. Dist. to Obs. (m) & Avg Active Ctrl & Avg SCP Iters & Avg Planning Time (ms) \\
        \midrule
        \multirow{3}{*}{6} & OURS & \textbf{80\%} & 0.3526$\pm$0.0510 & 0.1871$\pm$0.0248 & 0.750$\pm$0.960 & \textbf{8.689$\pm$6.928} & 33.814$\pm$25.544 \\
         & Vanilla MPC & 53\% & \textbf{0.3498$\pm$0.0447} & \textbf{0.1932$\pm$0.0172} & \textbf{0.675$\pm$0.980} & 13.806$\pm$2.622 & \textbf{10.185$\pm$4.162} \\
         & Smooth Blending Filter & \textbf{80\%} & 0.3804$\pm$0.0551 & 0.1791$\pm$0.0245 & 0.951$\pm$1.208 & 13.875$\pm$2.736 & 17.516$\pm$5.971 \\
        \midrule
        \multirow{3}{*}{8} & OURS & 81\% & 0.4023$\pm$0.0535 & \textbf{0.1903$\pm$0.0269} & \textbf{0.681$\pm$1.033} & \textbf{6.917$\pm$7.214} & 28.926$\pm$27.803 \\
         & Vanilla MPC & 70\% & \textbf{0.3498$\pm$0.0423} & 0.1833$\pm$0.0243 & 0.838$\pm$1.208 & 13.377$\pm$3.152 & \textbf{12.939$\pm$5.884} \\
         & Smooth Blending Filter & \textbf{83\%} & 0.4088$\pm$0.0644 & 0.1713$\pm$0.0257 & 0.977$\pm$1.283 & 13.959$\pm$2.561 & 21.142$\pm$7.071 \\
        \midrule
        \multirow{3}{*}{10} & OURS & 84\% & 0.4298$\pm$0.0540 & \textbf{0.1873$\pm$0.0322} & \textbf{0.728$\pm$1.131} & \textbf{6.637$\pm$7.164} & 29.921$\pm$29.778 \\
         & Vanilla MPC & 81\% & \textbf{0.3791$\pm$0.0441} & 0.1672$\pm$0.0292 & 0.926$\pm$1.376 & 13.364$\pm$3.132 & \textbf{16.020$\pm$7.455} \\
         & Smooth Blending Filter & \textbf{88\%} & 0.4399$\pm$0.0613 & 0.1621$\pm$0.0351 & 1.327$\pm$1.394 & 13.908$\pm$2.654 & 24.805$\pm$8.308 \\
        \midrule
        \multirow{3}{*}{12} & OURS & 84\% & 0.4394$\pm$0.0459 & \textbf{0.1786$\pm$0.0334} & \textbf{1.018$\pm$1.590} & \textbf{6.459$\pm$7.124} & 31.458$\pm$32.103 \\
         & Vanilla MPC & 83\% & \textbf{0.4075$\pm$0.0417} & 0.1540$\pm$0.0298 & 1.186$\pm$1.908 & 13.313$\pm$3.123 & \textbf{18.912$\pm$9.017} \\
         & Smooth Blending Filter & \textbf{90\%} & 0.4455$\pm$0.0607 & 0.1551$\pm$0.0352 & 1.310$\pm$1.636 & 13.731$\pm$2.849 & 28.767$\pm$10.006 \\
        \midrule
        \multirow{3}{*}{15} & OURS & \textbf{88\%} & 0.4655$\pm$0.0512 & \textbf{0.1597$\pm$0.0381} & \textbf{1.178$\pm$2.090} & \textbf{6.580$\pm$7.022} & 34.541$\pm$34.629 \\
         & Vanilla MPC & 84\% & \textbf{0.4405$\pm$0.0472} & 0.1437$\pm$0.0270 & 1.252$\pm$1.976 & 12.855$\pm$3.463 & \textbf{22.910$\pm$11.875} \\
         & Smooth Blending Filter & 82\% & 0.4536$\pm$0.0563 & 0.1517$\pm$0.0376 & 1.213$\pm$1.699 & 13.140$\pm$3.505 & 31.797$\pm$12.810 \\
        \bottomrule
    \end{tabular}
    \caption{Results of all metrics for the simulation ablation experiment}
    \label{tab:ablation_concise}
\end{table*}

\subsection{Discussion}
In this subsection, we analyze the experiment results and discuss several important observations. 

\noindent \textbf{1. Safety guarantees.} The learned safety value function is not perfect, and the feasible set of the safety value function constraint in Eq.~\ref{eq:safety_value_constraint} is not guaranteed to be control invariant. One could seek to provide probabilistic guarantees \cite{lin_2023_verify_neural_tubes, lin_2024_robust_verify_neural_tubes}, but formal guarantees on the learned safety value function are not available. 

\noindent \textbf{2. Safety considerations of our method.} In theory, our method guarantees recursive feasibility and safety. In practice, we cannot achieve the guarantees due to several factors. First, as we have discussed in the previous point, the learned safety value function is not guaranteed to induce a control invariant feasible set. Second, the MPC formulation with the final-time safety value function constraint in Prob.~\ref{prob:sv_mpc} is a non-convex problem. Though we can approximate the solution using sequential convexification techniques, the solution is not guaranteed to satisfy all the constraints, including the safety value function constraint. Third, practically speaking, the safety value function constraint is difficult to optimize. It is well known that the solution, called the viscosity solution, to the HJB-VI in Eq.~\ref{eq:HJBVI} is not differentiable \cite{crandall_viscosity_solution}. Even though we use a smooth neural approximator, namely sinusoidal neural network, to represent the solution to the HJB-VI, the learned safety value function can vary rapidly and pose serious challenges for gradient-based solvers to optimize. In the ablation experiment, 72 out of the total 83 failure trials fail due to the safety value function constraint not being satisfied prior to the state constraint violation, despite the system remaining within the safe set $\safeset$ prior to the violation of the safety value function constraint. This highlights the challenge of optimizing for the safety value function constraint. Lastly, tightening the safety value function constraint by increasing $\epsilon$ in Eq.~\ref{eq:safety_value_constraint} to ensure safety is not always a good idea. As we mentioned in Sec.~\ref{subsec:sv_mpc}, in practice we often try to construct a control invariant set by taking a super-$\epsilon$ level set of the safety value function to mitigate potential inaccuracies of the safety value function. It could be counterproductive with the safety value function constraint, because by increasing $\epsilon$ we are tightening the constraint. Doing so could make the optimization more difficult and lead to solver instability from our experience.

\noindent \textbf{3. Performance considerations of our method.} From the experiment results, our method is generally more conservative than the baselines. This is due to two factors. First, the safety value function constraint introduces additional challenges for optimization as we mentioned previously. Secondly, in this work we opt to use the converged safety value function instead of the time-varying safety value function. The feasible set of the converged safety value function is a subset of that of the time-varying safety value function at any time $\tvar < \infty$, introducing additional conservatism. One could use the time-varying safety value function with $\tvar =$ remaining task horizon at each planning step to reduce conservatism.

\section{Hardware Experiment}

\begin{figure*}[t]
    \centering
    \includegraphics[width=0.9\textwidth]{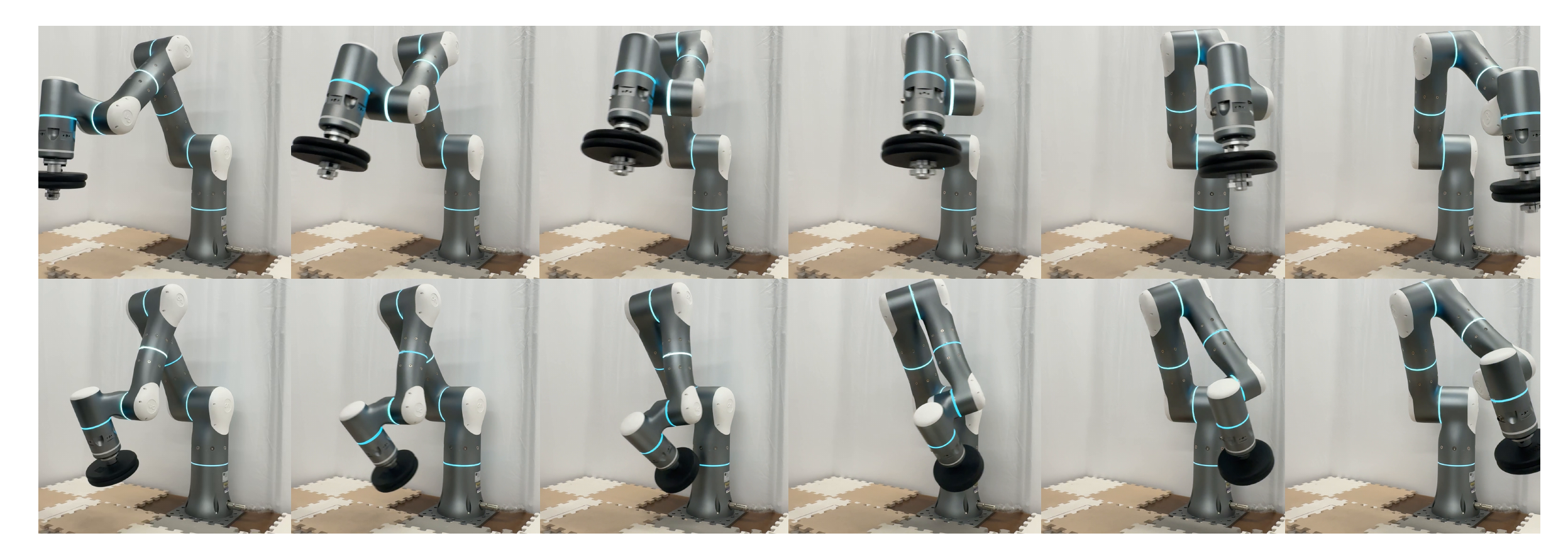}
    \caption{Time-lapse snapshots of 2 trials from the hardware experiments. The obstacle is modeled computationally to ensure collision-free trajectory optimization, but is omitted from the physical workspace to maintain full visibility of the manipulator's motion.}
    \label{fig:hardware_exp_fig}
\end{figure*}

We validate our method on the Flexiv Rizon 10s 7-DOF manipulator hardware platform. We attached two weighted plates with a total mass of 6.8$kg$ onto the robot end-effector. It is worthwhile to mention that the total weight of the payload and the coupling equipment is 7.5$kg$, which is slightly heavier than the 6.8$kg$ payload model we use for training the safety value function. This minor mismatch in the weight of the payload between the model and the hardware did not impact the performance of our method.

The experiment setup is identical to the simulation experiment described in Sec.~\ref{subsec:sim_exp_setup} with minor changes to the experiment parameters, and the parameters for the experiment are provided in TABLE.~\ref{tab:hardware_experiment_parameters}. To facilitate the MPC at 20$Hz$ on hardware and minimize modeling errors, we utilize a low-level PD controller running at 1000$Hz$ to track the desired joint position, joint velocity, and joint torque from the solution of each method. Furthermore, to ensure the safety of personnel and hardware platform, we enforce stricter joint velocity constraints and heavily penalize rapid movements within the MPC formulation, resulting in slower, more controlled hardware operation. All computations for the hardware experiments are carried out on a desktop with an AMD Ryzen 9 8940HX CPU with 64 GB of system memory, and a NVIDIA GeForce RTX 5070 GPU. 

\begin{table}[h]
    \centering
    \begin{tabular}{lc}
        \hline
        \textbf{Parameter} & \textbf{Value} \\
        \hline
        Task Horizon, $T$ & 5.0s \\
        MPC Control Timestep, $dt$ & 0.05s \\
        MPC Planning Horizon, $h$ & 12 (0.6s) \\
        Solver's Maximum Compute Time & 0.04s \\
        Safety Value Function Constraint Buffer $\epsilon$ & 0.05\\
        \hline
    \end{tabular}
    \caption{Parameters for the hardware experiment}
    \label{tab:hardware_experiment_parameters}
\end{table}

We perform 10 trials from different initial states, with snapshots of 2 representative trials — navigating a virtual obstacle — shown in Fig.~\ref{fig:hardware_exp_fig}. Our method achieves a safety rate of $80\%$, whereas the MPC baseline and SB-Filter baseline achieve a safety rate of $30\%$ and $40\%$, respectively. In terms of the average distance to the obstacle and the goal, the distances are fairly close across all 3 methods due to the hardware moving slower for safety reasons. 

\section{Conclusion} 
\label{sec:conclusion}
In this work, we propose a scalable framework that co-optimizes safety and performance for autonomous systems, and we demonstrate our framework on a high-dimensional manipulator platform. Our method incorporates a safety value function as a final-time constraint in the MPC formulation, and it is shown to improve the safety constraint satisfaction of the system in simulation and hardware experiments. However, our method relies on learning-based methods for computing the safety value function for high-dimensional systems, and as a result, our framework currently cannot provide formal safety guarantees. Furthermore, the learned safety value function poses additional challenges, both in terms of computation time and optimization landscape, for our gradient-based solver, potentially impacting the practical performance and safety of the system. We look to address these challenges in future work.  

\bibliographystyle{IEEEtran}
\bibliography{citations} 

\end{document}